\providecommand{\keywords}[1]
{
	\small
	\textbf{\textit{Keywords---}} #1
}
\theoremstyle{definition}
\newtheorem{definition}{Definition}
\newtheorem{theorem}{Theorem}[section]
\title{rKAN: Rational Kolmogorov-Arnold Networks}
\author{Alireza Afzal Aghaei}
\affil{Independent Researcher\\
Email: alirezaafzalaghaei@gmail.com}
\begin{document}
\maketitle

\small

\begin{abstract}
The development of Kolmogorov-Arnold networks (KANs) marks a significant shift from traditional multi-layer perceptrons in deep learning. Initially, KANs employed B-spline curves as their primary basis function, but their inherent complexity posed implementation challenges. Consequently, researchers have explored alternative basis functions such as Wavelets, Polynomials, and Fractional functions. In this research, we explore the use of rational functions as a novel basis function for KANs. We propose two different approaches based on Padé approximation and rational Jacobi functions as trainable basis functions, establishing the rational KAN (rKAN). We then evaluate rKAN’s performance in various deep learning and physics-informed tasks to demonstrate its practicality and effectiveness in function approximation.
\end{abstract}
\keywords{Rational Functions, Jacobi Polynomials, Kolmogorov-Arnold Networks, Physics-informed Deep Learning}

\section{Introduction}
Function approximation is a crucial area of study within numerical analysis and computational mathematics. It involves using simpler functions, known as basis functions, to represent complex ones, thereby simplifying analysis and computation. This process is essential for various applications such as solving differential equations, data fitting, and machine learning \cite{gautschi2011numerical, alpaydin2020introduction}. By approximating functions, we can predict outcomes, optimize processes, and identify patterns in data. Various basis functions are employed for function approximation, each with its unique advantages suited to specific problem requirements. These functions serve as the building blocks for approximating more complex functions and can significantly influence the accuracy and efficiency of the approximation.

In numerical analysis, one common method is polynomial curve fitting, where the basis functions are polynomials. These functions are simple and easy to compute but can suffer from instability issues, especially with higher-degree polynomials. This phenomenon, known as Runge's phenomenon \cite{boyd1992defeating}, highlights the limitations of polynomial basis functions for certain types of data. Spline interpolation is another widely used method, particularly effective for functions with intricate shapes. Splines are piecewise polynomials that ensure smoothness at the points where the polynomial pieces connect, called knots. This method offers great flexibility and smoothness, making it ideal for applications requiring a high degree of accuracy in the approximation of curves and surfaces. 

Fourier series, which use trigonometric functions as basis functions, are particularly effective for approximating periodic functions. The Fourier basis functions, consisting of sines and cosines, can represent periodic behavior accurately and are widely used in signal processing, image analysis, and other fields requiring periodic function analysis \cite{boyd2001chebyshev}. Wavelets are another class of basis functions that have gained popularity, especially in signal and image processing. Wavelets enable multi-resolution analysis, providing a method to analyze data at various levels of detail. This is particularly beneficial for applications involving hierarchical or time-frequency analysis \cite{bozorgasl2024wav}. Fractional basis functions are another type of functions that can capture intricate behaviors and subtle variations in data that integer-order methods might miss. This makes them particularly useful in modeling natural phenomena \cite{babaei2024solving}. They are able to provide smooth approximations with fewer terms compared to integer-order polynomials, resulting in more efficient computations and a reduced risk of overfitting. Fractional B-splines, for example, offer greater flexibility in controlling the smoothness and continuity of the approximating function, making them ideal for applications requiring high precision and adaptability \cite{masti2024collocation}.

Rational approximations, where basis functions are ratios of polynomials, provide a robust method for approximating functions with asymptotic behavior and singularities. These basis functions are particularly useful for functions with sharp peaks or rapid changes. Padé approximants, a specific type of rational approximation, are known for their ability to achieve accurate approximations over a broad range of values. This method is particularly effective in areas such as control theory and complex analysis, where premasti2024collocationcise function approximation is critical \cite{babaei2024solving, boyd2001chebyshev, baker1961pade}.

Most of these basis functions have been developed for machine learning and deep learning tasks. In these fields, algorithms aim to approximate a function through a potentially nested combination of basis functions that best fit the given data. Examples include support vector machines with polynomial \cite{alpaydin2020introduction} or fractional kernels \cite{rad2023learning}, the least-squares support vector machines with an orthogonal rational kernel \cite{babaei2024solving}, and neural networks with various activation functions such as orthogonal Legendre \cite{parand2023neural, aghaei2023solving}, Fourier \cite{silvescu1999fourier, pratt2017fcnn}, fractional \cite{hadian2020single}, and rational functions \cite{boulle2020rational, siu1994rational, leung1993rational, telgarsky2017neural}. Additionally, B-spline neural networks have been explored for their modeling capabilities \cite{hong2011modeling, dos2009nonlinear, chen2014complex, bohra2020learning}. In some scenarios, B-spline neural networks can be regarded as Kolmogorov-Arnold neural networks.

Kolmogorov-Arnold Networks, based on the Kolmogorov-Arnold representation theorem \cite{liu2024kan, kiamari2024gkan, samadi2024smooth, liu2024initial}, offer a novel approach to accurately fitting real-world data. These networks have been applied to various domains, including time-series analysis \cite{genet2024tkan, vaca2024kolmogorov, xu2024kolmogorov}, human activity recognition \cite{liu2024ikan}, seizure detection \cite{herbozo2024kan}, electrohydrodynamic pumps \cite{peng2024predictive}, and cognitive diagnosis \cite{yang2024endowing}. Initially, these networks were developed using B-spline curves \cite{liu2024kan}. However, due to implementation challenges and issues with smoothness, alternative basis functions have been explored. These alternatives include Wavelet KANs \cite{bozorgasl2024wav, seydi2024unveiling}, Fourier KANs \cite{xu2024fourierkan}, radial basis function KANs \cite{li2024kolmogorov, ta2024bsrbfkan}, polynomial KANs \cite{seydi2024exploring, ss2024chebyshev}, and fractional KANs \cite{aghaei2024fkan}.

While some attempts have applied rational functions to traditional neural networks \cite{boulle2020rational,siu1994rational,leung1993rational,telgarsky2017neural}, there has been no research on the applicability of rational functions in KANs. In this paper, we examine the accuracy of KANs using rational functions through two different approaches: 1) Padé approximation and 2) a mapped version of Jacobi polynomials. The first approach uses the original Jacobi polynomials to construct a rational approximation in the Padé scheme for describing the data. The second approach maps the original Jacobi polynomial, defined on a finite interval, into a possibly infinite domain with a rational mapping. We demonstrate in both cases how the fractional KAN \cite{aghaei2024fkan} approach can be utilized as a generalized case in our methodology. Finally, we compare the results of these two approaches with KANs and other alternatives in deep learning tasks such as regression and classification. We also assess the accuracy of this approach in physics-informed deep learning tasks, particularly by approximating the solution of certain differential equations.

The rest of the paper is organized as follows. In Section 2, we review some preliminaries on Jacobi polynomials and their properties. Section 3 explains the KAN formulations and the proposed methodology. In Section 4, we validate the proposed method on several real-world problems. Finally, in Section 5, we present some concluding remarks.

\section{Jacobi polynomials}

Jacobi polynomials (denoted by $\mathcal{J}^{(\alpha,\beta)}_{n}(\xi)$) are an infinite sequence of orthogonal functions that are mutually orthogonal to each other \cite{rad2023learning}. Mathematically, the following inner product will be zero for $n \ne m$:
\begin{equation*}
    \langle \mathcal{J}_{m}^{(\alpha ,\beta )}, \mathcal{J}_{n}^{(\alpha ,\beta )}\rangle_{\omega(\xi)} = \int_{-1}^{1} 
\mathcal{J}_{m}^{(\alpha ,\beta )}(\xi) \mathcal{J}_{n}^{(\alpha ,\beta )}(\xi)  \omega(\xi) \text{d}\xi =  \langle \mathcal{J}_{n}^{(\alpha ,\beta )}, \mathcal{J}_{n}^{(\alpha ,\beta )}\rangle_{\omega(\xi)} \delta_{m,n}.
\end{equation*}
For $n=0,1,\ldots,$ these polynomials are defined by the Gamma function:
\begin{equation*}
    {\displaystyle \mathcal{J}_{n}^{(\alpha ,\beta )}(\xi)={\frac {\Gamma (\alpha +n+1)}{n!\,\Gamma (\alpha +\beta +n+1)}}\sum _{m=0}^{n}{n \choose m}{\frac {\Gamma (\alpha +\beta +n+m+1)}{\Gamma (\alpha +m+1)}}\left({\frac {\xi-1}{2}}\right)^{m}}.
\end{equation*}
In this definition, $\alpha, \beta > -1$ play the role of hyperparameters that affect the shape of the resulting function. We can treat these parameters as unknown weights in the computational graph and optimize them during the network's optimization process. However, we must ensure the validity of their values. For this purpose, we utilize the well-known ELU activation function \cite{clevert2015fast}, which possesses the property $\text{ELU}: \mathbb{R} \to (-\kappa, \infty)$:
\begin{equation*}
    \text{ELU}(\xi; \kappa) = \begin{cases} \xi & \text{if } \xi > 0, \\ \kappa \times (e^\xi - 1) & \text{if } \xi \leq 0, \end{cases}
\end{equation*}
where $\kappa$ is a parameter that controls the lower bound of the range of the ELU function. Consequently, to ensure meaningful Jacobi functions for parameters $\alpha$ and $\beta$, $\kappa$ can be set to $1$.

The Jacobi function is traditionally defined on the interval $[-1,1]$, which limits its use in approximating functions across desired intervals. Consequently, researchers have developed techniques to extend their definition to a potentially infinite domain. These extended functions can be generated by the following definition.

\begin{definition}[Mapped Jacobi function]
By applying an invertible mapping function $\varphi: \Omega \to [-1,1]$ to the input of Jacobi polynomials, the mapped Jacobi functions can be generated as:
\begin{equation*}
\mathcal{R}^{(\alpha,\beta)}_n(\xi) = \mathcal{J}^{(\alpha,\beta)}_n(\varphi(\xi)).
    \end{equation*}
\end{definition}
The choice of $\varphi(\cdot)$ can vary depending on the original problem domain. For instance, for a finite domain $\Omega=[d_0, d_1]$, the linear mapping
\begin{equation*}
       \varphi(\xi; d_0,d_1) = \frac{2\xi-d_0-d_1}{d_1-d_0},
   \end{equation*}
can be employed (Figure \ref{fig:shift-jacobi}). For a semi-infinite domain $\Omega=(0, \infty)$, three major options with the hyperparameter $\iota > 0$ are available:
    \begin{itemize}
        \item Logarithmic mapping (Figure \ref{fig:log-semi-jacobi}):  
    \begin{equation}
            \phi(\xi; \iota)=2\tanh\left(\dfrac{\xi}{\iota}\right)-1, \label{eq:map_log}
        \end{equation}
        \item Algebraic mapping (Figure \ref{fig:alg-semi-jacobi}): 
        \begin{equation}
            \phi(\xi; \iota)=\dfrac{\xi-\iota}{\xi+\iota}, \label{eq:map_alg}
        \end{equation}
        \item Exponential mapping (Figure \ref{fig:exp-semi-jacobi}):  
        \begin{equation}
            \phi(\xi; \iota)=1-2{\exp({-\displaystyle\dfrac{\xi}{\iota}})}. \label{eq:map_exp}
        \end{equation}
    \end{itemize}
Finally, for the infinite interval $\Omega=(-\infty, \infty)$, one can use a nonlinear mapping with $\iota > 0$ to generate mapped Jacobi functions:
    \begin{itemize}
        \item Logarithmic mapping (Figure \ref{fig:log-inf-jacobi}):  
    \begin{equation}
            \phi(\xi; \iota)=\tanh\left(\dfrac{\xi}{\iota}\right), \label{eq:map_log2}
        \end{equation}
        \item Algebraic mapping (Figure \ref{fig:alg-inf-jacobi}): 
        \begin{equation}
            \phi(\xi; \iota)=\dfrac{\xi}{\sqrt{\xi^2+\iota^2}}.\label{eq:map_alg2}
        \end{equation}
        
        \end{itemize}
When the domain $\Omega$ is semi-infinite or infinite, it is common to call $\mathcal{R}^{(\alpha,\beta)}_n(\xi)$ as rational Jacobi functions \cite{boyd2001chebyshev}. As illustrated in Figure \ref{fig:jacobi-plots}, these functions are non-zero and possess real-valued distinct roots within their domain. They are differentiable, and their derivatives can be expressed in terms of the functions themselves. For a more detailed discussion of these functions, we refer the reader to \cite{aghaei2023solving, rad2023learning, boyd2001chebyshev}.

\begin{figure}[htbp]
  \centering
  \begin{subfigure}{0.37\textwidth}
    \centering
    \includegraphics[width=\textwidth]{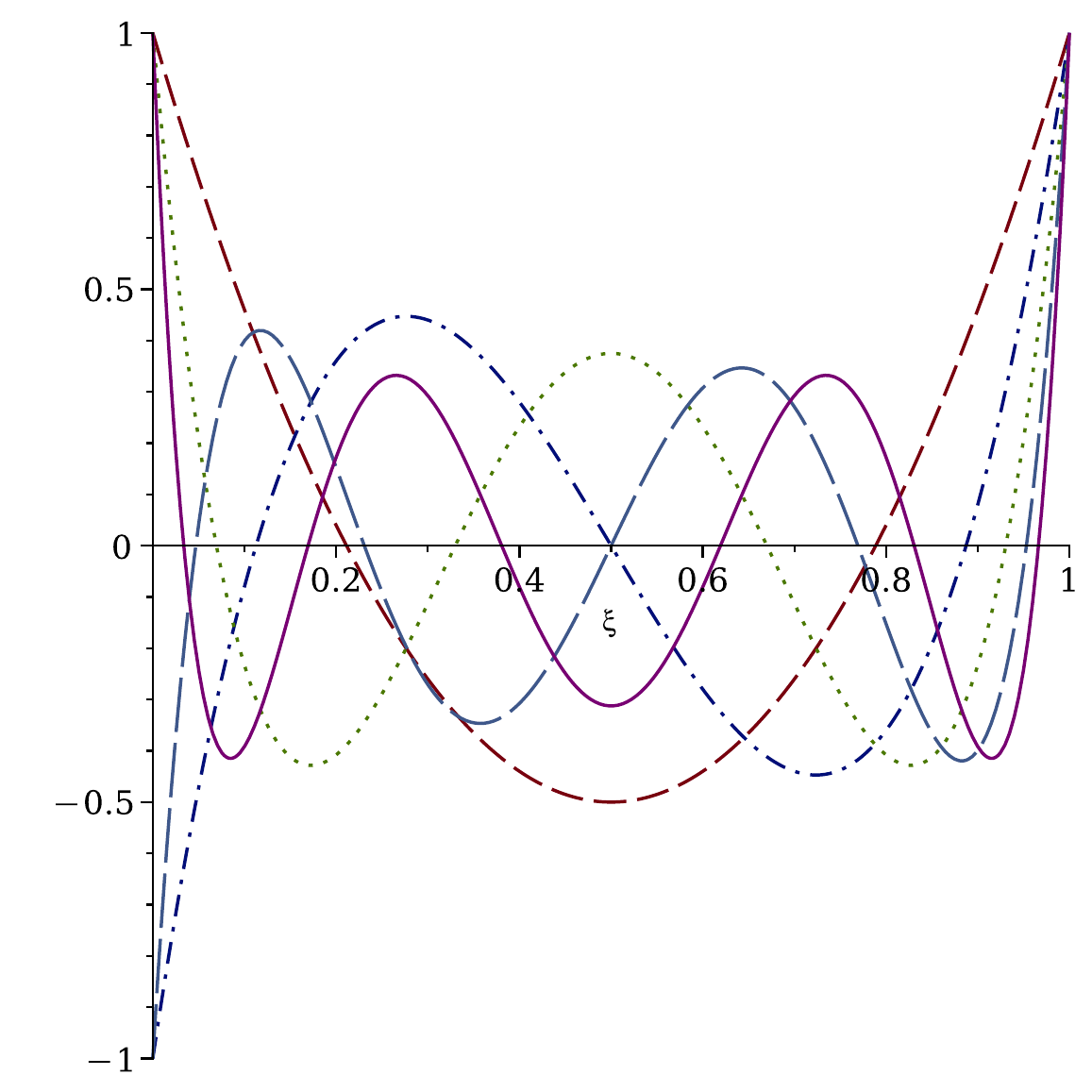}
    \caption{Finite mapping}
    \label{fig:shift-jacobi}
  \end{subfigure}%
  \begin{subfigure}{0.37\textwidth}
    \centering
    \includegraphics[width=\textwidth]{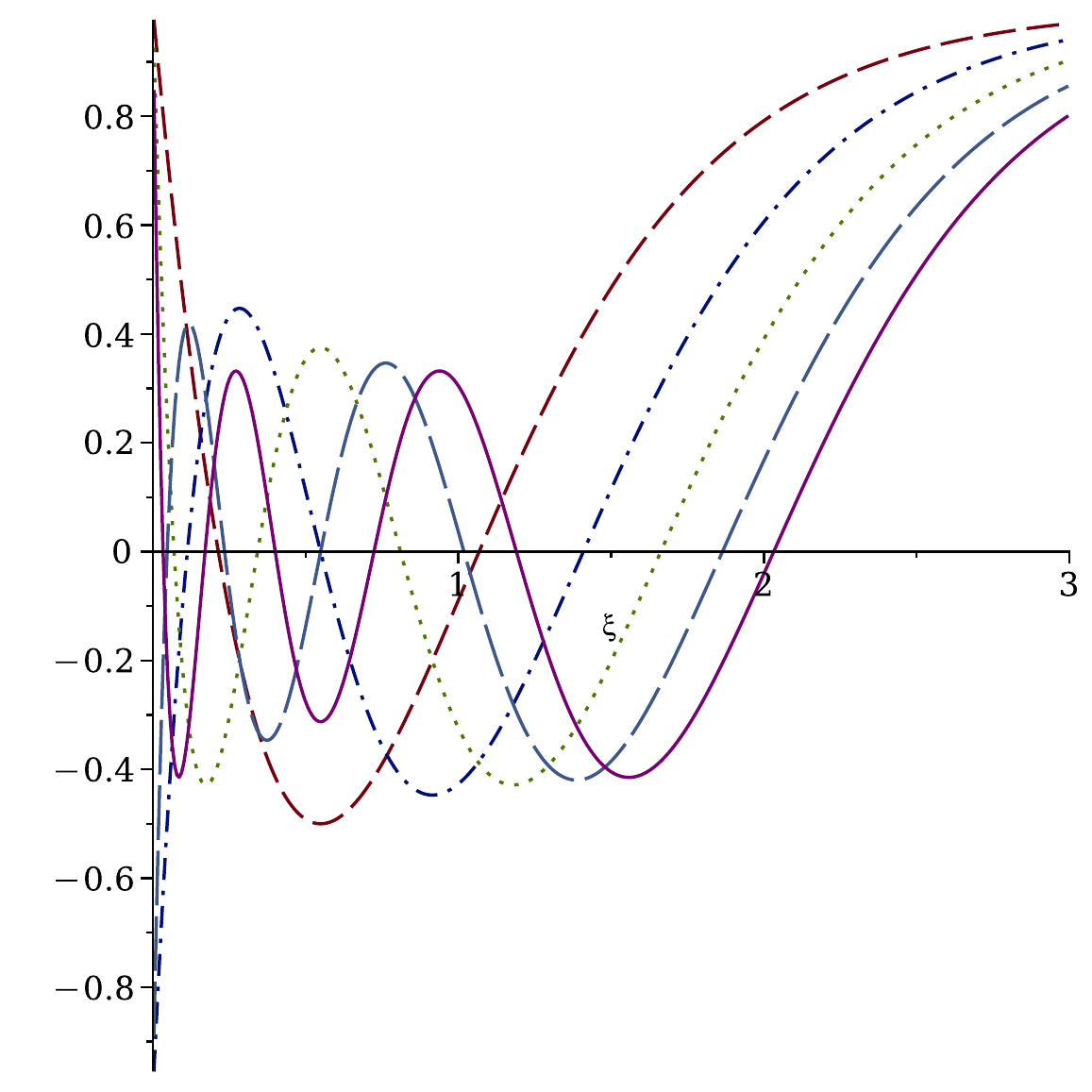}
    \caption{Logarithmic mapping $(0, \infty)$}
    \label{fig:log-semi-jacobi}
  \end{subfigure}
  \\
  \begin{subfigure}{0.37\textwidth}
    \centering
    \includegraphics[width=\textwidth]{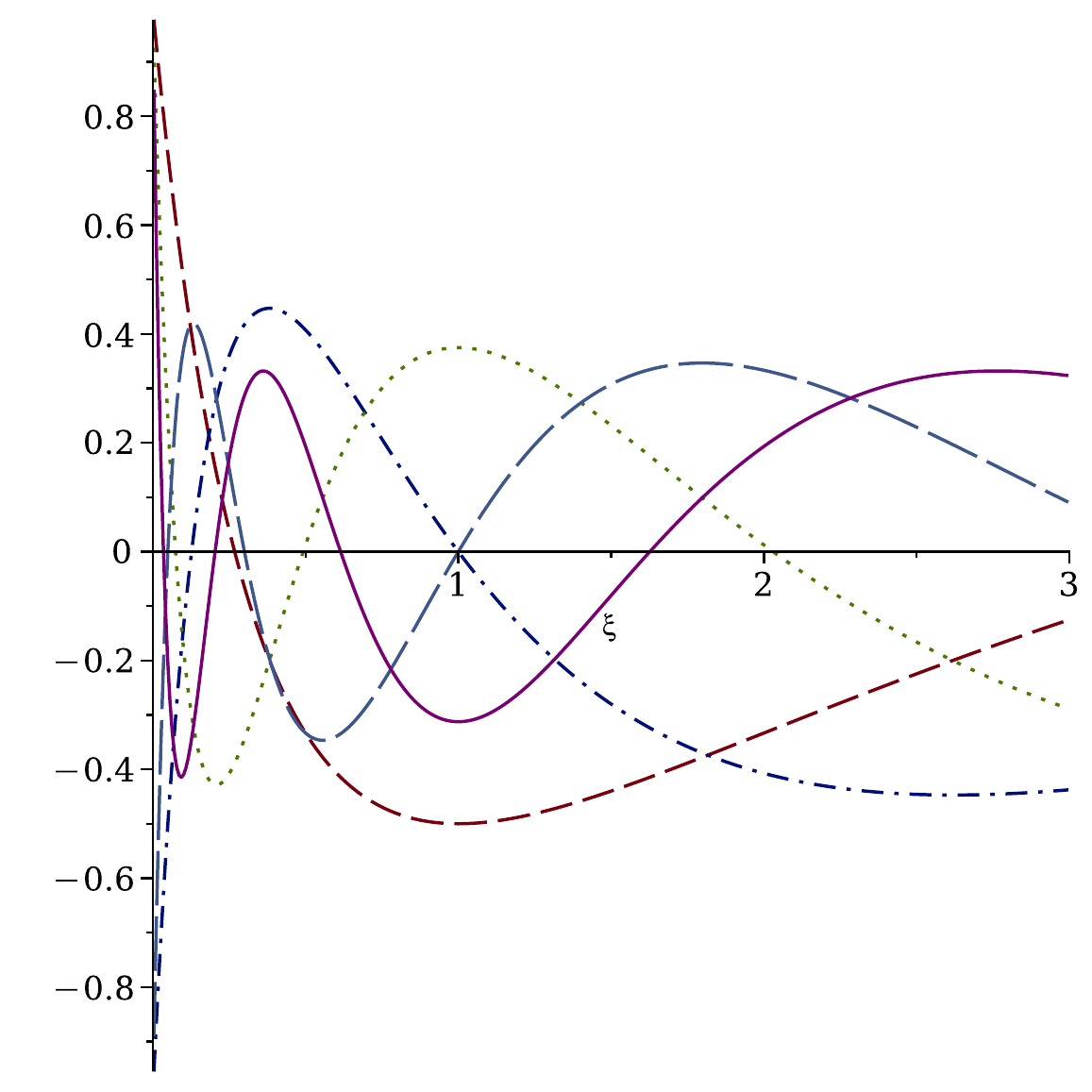}
    \caption{Algebraic mapping $(0, \infty)$}
    \label{fig:alg-semi-jacobi}
  \end{subfigure}%
  \begin{subfigure}{0.37\textwidth}
    \centering
    \includegraphics[width=\textwidth]{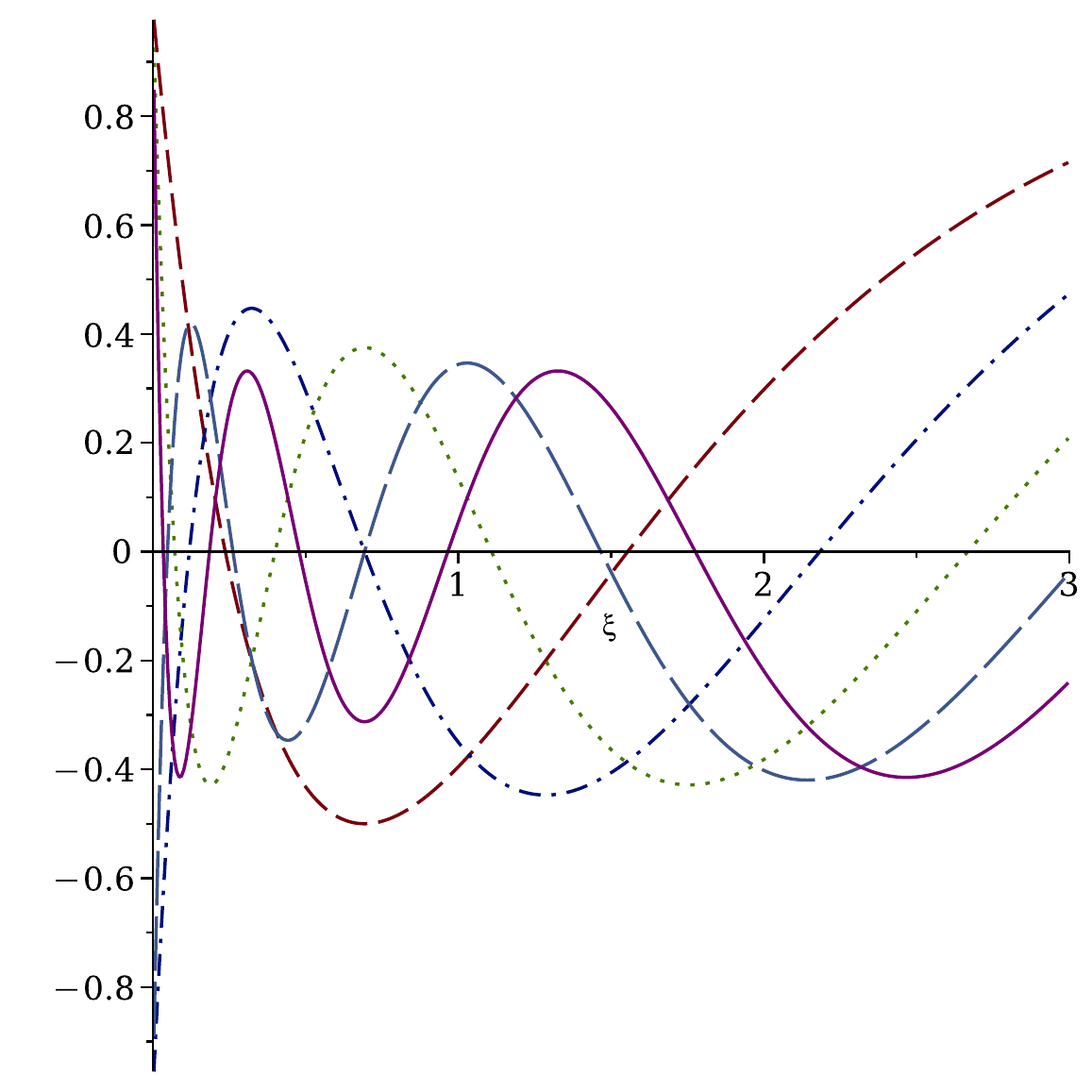}
    \caption{Exponential mapping $(0, \infty)$}
    \label{fig:exp-semi-jacobi}
  \end{subfigure}
  \\
  \begin{subfigure}{0.37\textwidth}
    \centering
    \includegraphics[width=\textwidth]{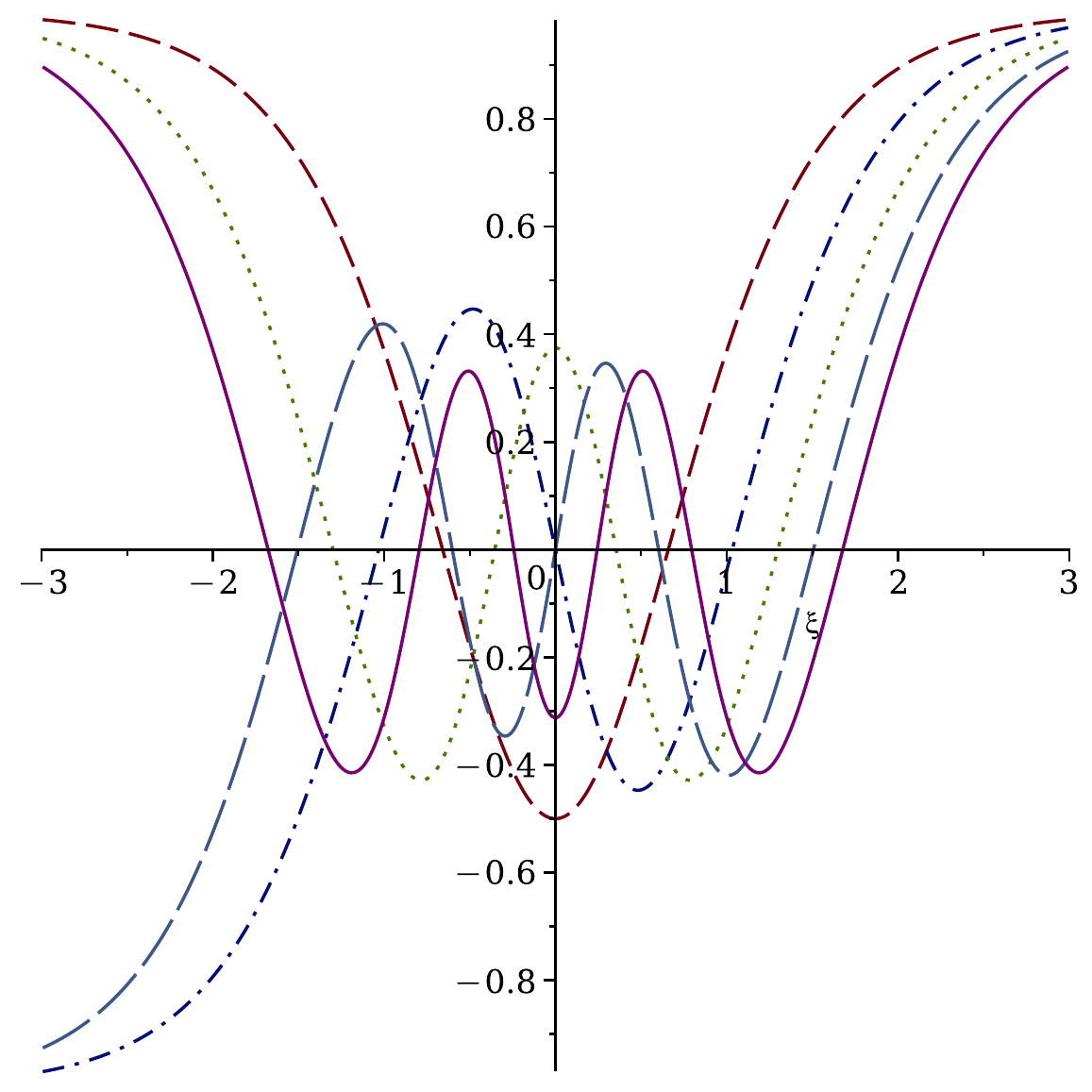}
    \caption{Logarithmic mapping $(-\infty, \infty)$}
    \label{fig:log-inf-jacobi}
  \end{subfigure}%
  \begin{subfigure}{0.37\textwidth}
    \centering
    \includegraphics[width=\textwidth]{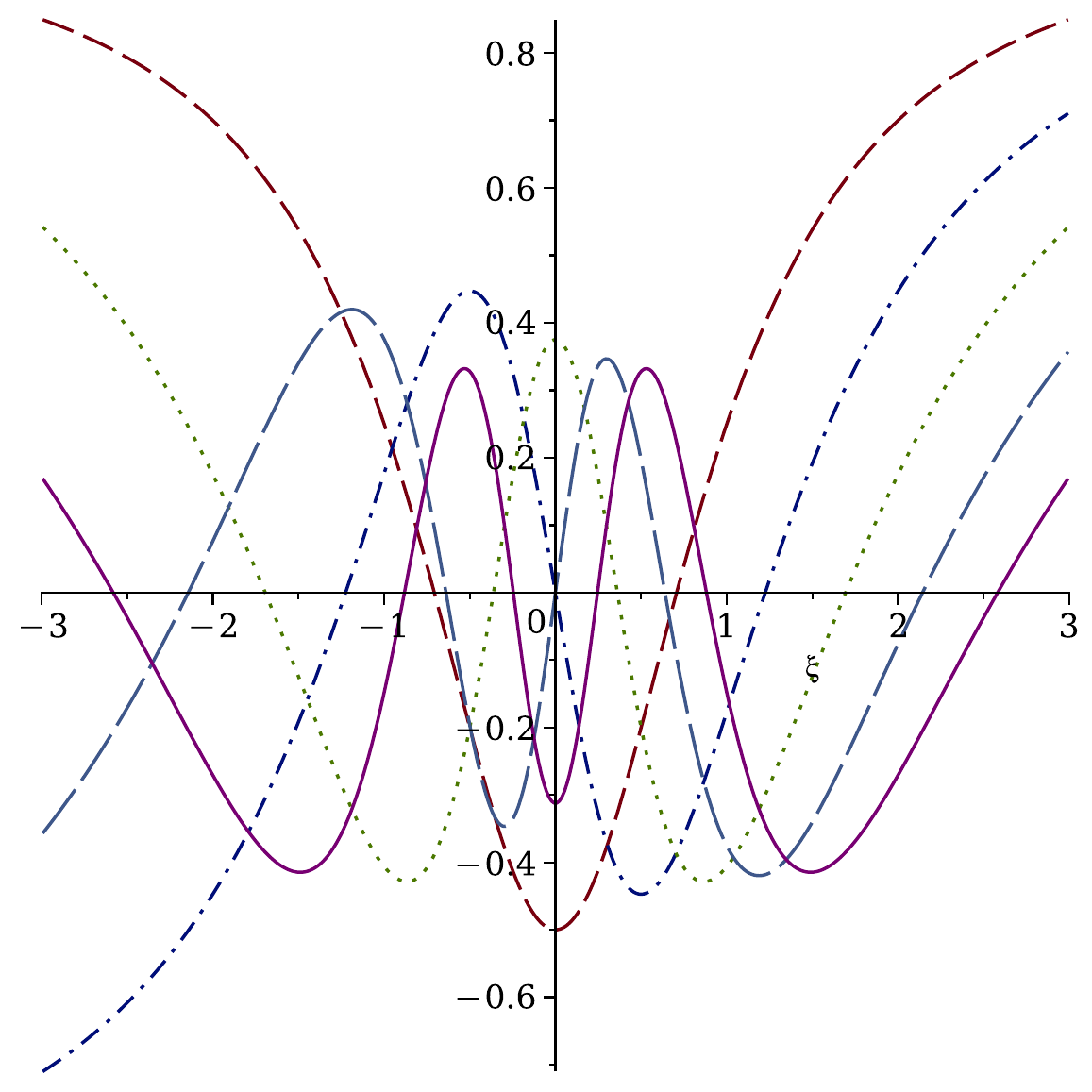}
    \caption{Algebraic mapping $(-\infty, \infty)$}
    \label{fig:alg-inf-jacobi}
  \end{subfigure}
  \caption{Plots of mapped Jacobi functions $\mathcal{J}_n^{(0,0)}(\xi)$ for $n=2,3,\ldots,6$ over finite, semi-infinite, and infinite domains.}
  \label{fig:jacobi-plots}
\end{figure}

\section{Rational KAN}
In this section, we introduce two approaches for developing rKANs. To begin, we briefly review the original KANs by stating the Kolmogorov-Arnold theorem.

\begin{theorem}
For any continuous function \( F : [0,1]^\nu \to \mathbb{R} \), there exist continuous functions \( \varphi_{q,k} : [0,1] \to \mathbb{R} \) and continuous functions \( \psi_k : \mathbb{R} \to \mathbb{R} \) such that
\[
F(\xi_1, \xi_2, \ldots, \xi_\nu) = \sum_{k=1}^{2\nu+1} \psi_k \left( \sum_{q=1}^{\nu} \phi_{q,k}(\xi_q) \right).
\]
\end{theorem}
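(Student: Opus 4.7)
The plan is to follow the classical Kolmogorov (1957) / Sprecher construction, decomposing the argument into building the inner functions first and then the outer functions by an iterative uniform-approximation scheme. First I would reduce the theorem to the construction of $2\nu+1$ maps $\Phi_k(\xi_1,\ldots,\xi_\nu) = \sum_{q=1}^{\nu}\phi_{q,k}(\xi_q)$ whose combined action embeds $[0,1]^\nu$ into $\mathbb{R}^{2\nu+1}$ with a "majority separation" property: for every partition of $[0,1]^\nu$ into sufficiently small subcubes, at least $\nu+1$ of the indices $k\in\{1,\ldots,2\nu+1\}$ send distinct subcubes to pairwise disjoint subintervals of $\mathbb{R}$.

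To construct the inner functions $\phi_{q,k}$, I would take a nested sequence of dyadic partitions of $[0,1]$ and build monotone Lipschitz step-like functions whose values on the partition cells are rational multiples of a common base, with carefully chosen incommensurable shifts across the $2\nu+1$ values of $k$. The majority separation property is then established by a pigeonhole argument: any pair of subcubes in $[0,1]^\nu$ can collide in the image of $\Phi_k$ for at most $\nu$ values of $k$, so at least $\nu+1$ of the coordinate systems resolve them uniquely. Continuity and Lipschitz estimates on $\phi_{q,k}$ come from controlling the step heights across successive dyadic refinements.

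For the outer step I would run a Weierstrass-type telescoping construction. Given continuous $F$, I would inductively define approximations $F_j$ by choosing continuous plateau functions $\psi_k^{(j)}$ that agree with suitable averages of the residual $F - F_{j-1}$ on the images $\Phi_k(C)$ of the scale-$j$ subcubes $C$. The majority separation property guarantees that the residual can be distributed among the $2\nu+1$ outer summands with $\|F - F_j\|_\infty \le \theta\,\|F - F_{j-1}\|_\infty$ for a fixed ratio $\theta<1$ independent of $j$ (typically $\theta = 1 - \tfrac{1}{2\nu+1}$). Setting $\psi_k = \sum_j \psi_k^{(j)}$ then gives a uniformly convergent series of continuous functions, and passing to the limit yields the claimed identity $F = \sum_{k=1}^{2\nu+1}\psi_k\circ\Phi_k$.

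The hard part will be the combinatorial design of the inner functions: producing the $\phi_{q,k}$ so that on every sufficiently fine scale a strict majority of the $2\nu+1$ coordinate systems resolves each subcube uniquely requires a delicate incommensurability/number-theoretic argument, and this is the technical heart of the proof. Once that separation lemma is in hand, the outer iteration is routine Banach-style contraction bookkeeping, and the continuity of the limiting $\psi_k$ follows from the uniform bounds built into each step of the construction.
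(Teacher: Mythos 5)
The paper does not actually prove this theorem: its ``proof'' is a one\--line deferral to the literature (Kolmogorov's original argument and the constructive treatments in the cited references), since this is the classical Kolmogorov--Arnold representation theorem and not a contribution of the paper. Your sketch is therefore not comparable to anything in the paper itself; what it is comparable to is the standard Kolmogorov/Sprecher/Lorentz proof, and as an outline of that proof it has the right overall shape: fixed inner functions $\Phi_k(\boldsymbol\xi)=\sum_q\phi_{q,k}(\xi_q)$ built once and for all with a combinatorial separation property, followed by an iterative construction of the outer functions with geometric decay of the residual and a uniformly convergent series $\psi_k=\sum_j\psi_k^{(j)}$. The contraction ratio you quote is not the sharp one (the standard argument gives roughly $\nu/(\nu+1)$, or $(2\nu+1)/(2\nu+2)$ after accounting for oscillation), but any fixed $\theta<1$ suffices, so that is harmless.

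The substantive issue is that the technical heart of the proof --- which you yourself identify as such --- is asserted rather than carried out, and the separation property you state is not quite the one the outer iteration needs. In the classical construction one does not work with partitions of $[0,1]^\nu$ into subcubes: for each $k$ one builds a family of pairwise \emph{separated} cubes (leaving small gaps), arranges that distinct cubes of family $k$ have disjoint images under $\Phi_k$ for \emph{every} $k$ (via monotone $\phi_{q,k}$ and rationally independent coefficients, e.g.\ $\phi_{q,k}=\lambda_q\phi_k$), and staggers the gaps across $k$ so that each point of $[0,1]^\nu$ lies in the \emph{interior} of a cube for at least $\nu+1$ of the $2\nu+1$ families. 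It is this coverage majority, not a ``majority of $k$'s resolve each pair of cells of a partition,'' that lets you set $\psi_k^{(j)}$ equal to $\tfrac{1}{\nu+1}$ times the residual at the cube centers and obtain the uniform contraction; with a genuine partition the images of adjacent cells cannot be made disjoint and the plateau construction breaks down. You also need to verify that the limiting $\phi_{q,k}$ obtained through the infinitely many dyadic refinements are continuous (monotonicity plus summable increments), and that the extension of each $\psi_k^{(j)}$ off the images $\Phi_k(C)$ preserves the bound $\|\psi_k^{(j)}\|_\infty\le\tfrac{1}{\nu+1}\|F-F_{j-1}\|_\infty$ so the series for $\psi_k$ converges uniformly. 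Until the inner\--function lemma is stated precisely and proved, the proposal is a correct roadmap rather than a proof; given that the paper itself treats the theorem as a citation, that roadmap is more than the paper offers, but it is not yet a self\--contained argument.
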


\begin{proof}
The proof of the Kolmogorov-Arnold representation theorem is highly non-trivial and relies on advanced concepts in functional analysis. For more detailed definitions and proofs, refer to \cite{liu2024kan, braun2009constructive, dzhenzher2021structured, schmidt2021kolmogorov}.
\end{proof}
Employing this theorem, KANs suggest using a nested combination of this approximation for more accurate predictions. In matrix form, KANs are defined as:
\begin{equation*}
    \hat{F}(\boldsymbol\xi)={\bf \Phi}_{L-1}\circ\cdots \circ{\bf \Phi}_1\circ{\bf \Phi}_0\circ {\boldsymbol{\xi}},
\end{equation*}
where ${\bf \Phi}_{q,k} = \phi_{q,k}(\cdot)$ and $\boldsymbol{\xi}$ is the input sample of the network.

To employ a rational basis function in this approach, one can define the functions $\phi_{q,k}(\cdot)$ using a rational function. There are two approaches to generate such a rational function. The first approach is to divide two polynomials, known as the Padé approximation. The second approach is to use a rationalized form of Jacobi functions. In the following, we explain these two approaches.

\subsection{Padé approximation}
The Padé approximation is a method for approximating a function by a rational function of the given order. Specifically, a Padé approximant of order \([q/k]\) for a function \( F(\xi) \) is:
\[
F^{[q/k]}(\xi) \approx\frac{A_q(\xi)}{B_k(\xi)}= \frac{\displaystyle\sum_{i=0}^{q} a_i \xi^i}{\displaystyle\sum_{j=0}^{k} b_j \xi^j},
\]
where \( a_i, b_i \) are real-valued numbers. The polynomials \( A_q(\xi) \) and \( B_k(\xi) \) can be chosen as the original or finite-shifted Jacobi polynomials. For rKAN, we consider the functions \( \phi_{q,k} \) as:
\[
\phi_{q,k}(\boldsymbol\xi) = \frac{\displaystyle\sum_{i=0}^{k}\theta^e_i\mathcal{R}^{(\alpha,\beta)}_i(\boldsymbol\xi_q)}{\displaystyle\sum_{i=0}^{p}\theta^d_i\mathcal{R}^{(\alpha,\beta)}_i(\boldsymbol\xi_q)}.
\]
Here \( \theta^e_i \) and \( \theta^e_i \) are trainable weights and $p$ is a positive integer. Note that the input of the Jacobi polynomial should lie within a specific domain \( [d_0, d_1] \); therefore, a bounded range activation function such as Sigmoid or hyperbolic tangent (namely $\sigma(\cdot)$) should be applied to the input of these functions. Finally, the Padé-rKAN is defined as:
\[
F(\boldsymbol{\xi}) = \sum_{k=1}^{K} \psi_k \left( \sum_{q=1}^{\nu} \phi_{q,k}(\sigma(\boldsymbol\xi_q)) \right),
\]
in which the functions \( \psi_k(\cdot) \) are considered as linear functions. In this formulation, the fractional rational KAN (frKAN) is applicable if we use a linear mapping function on Jacobi polynomials that shifts data to the positive part of the real line. Suppose we use \( \varphi(\xi) = 2 \xi^\gamma - 1 \) with the Sigmoid function \( \sigma(\cdot) \) and a trainable positive fractional order parameter \( \gamma \). The fractional rational basis functions then take the form:
\[
\phi_{q,k}(\boldsymbol\xi) = \frac{\displaystyle\sum_{i=0}^{q}\theta^e_i\mathcal{R}^{(\alpha,\beta)}_i(\varphi(\sigma(\boldsymbol\xi)))}{\displaystyle\sum_{i=0}^{p}\theta^d_i\mathcal{R}^{(\alpha,\beta)}_i(\varphi(\sigma(\boldsymbol\xi)))}.
\]

\subsection{Rational Jacobi functions}
Another approach to using a rational function in KANs is to map the Jacobi polynomials using a nonlinear rational mapping. These mappings can be defined on a semi-infinite domain or on the entire real line. Since the output of a network layer is unbounded, using an infinite mapping such as \eqref{eq:map_alg2} or \eqref{eq:map_log2} can be beneficial. As a result, the basis functions \( \varphi_{q,k}(\cdot) \) are defined as:
\begin{equation*}
    \varphi_{q,k}(\boldsymbol\xi_q) = \mathcal{J}_k^{(\alpha, \beta)}\left(\varphi(\boldsymbol\xi_k; \text{SoftPlus}(\iota))\right),
\end{equation*}
where the soft plus function is defined as:
\begin{equation*}
\begin{aligned}
&\text{SoftPlus}: \mathbb{R}\to (0,\infty),\\
&\text{SoftPlus}(\iota) = \log(1+\exp(\iota)),
\end{aligned}
\end{equation*}
and is applied to the trainable parameter $\iota$ to ensure its positiveness. Similar to the Padé-rKAN, the approximation of Jacobi-rKAN takes the form:
\[
F(\boldsymbol{\xi}) = \sum_{k=1}^{K} \psi_k \left( \sum_{q=1}^{\nu} \phi_{q,k}(\sigma(\boldsymbol\xi_q)) \right).
\]
In this case, to apply fractional basis functions, it is necessary to use a positive range function \( \sigma \) along with a rational mapping that is defined for positive values. Suitable rational mappings include those given by formulas such as \eqref{eq:map_alg} (algebraic mapping), \eqref{eq:map_exp} (exponential mapping), and \eqref{eq:map_log} (logarithmic mapping). The other definitions within the framework remain unchanged.

\section{Experiments}
In this section, we evaluate the proposed rational KAN on various deep learning tasks. All experiments are implemented in Python using the PyTorch and TensorFlow libraries. The experiments are conducted on a PC equipped with an Intel Core i3-10100 CPU, an Nvidia GeForce GTX 1650 GPU, and 16GB of RAM. The implementation of this approach is publicly available on GitHub\footnote{\url{https://github.com/alirezaafzalaghaei/rKAN}}.

\subsection{Deep learning}
This section presents classification and regression tasks simulated using rKAN.
\subsubsection{Regression Tasks}
We begin the assessment of rKAN with a regression task using synthetic data generated from three different functions with asymptotic behavior. These functions are defined as follows and are illustrated in Figure \ref{fig:regression}:

\[
\begin{aligned}
    F_1(\xi) &= \frac{\xi}{1+\xi^2},\\
    F_2(\xi) &= \frac{1}{1+\xi^2},\\
    F_3(\xi) &= \exp(-\xi^2).
\end{aligned}
\]

For training, we sample 200 random points and for testing, 100 random points within the interval \([-10, 10]\). We use a neural network with an architecture of [1, 10, 1], where the hidden layer contains 10 neurons. The network is optimized using the L-BFGS optimizer with full batch processing for 50 epochs. We use the mean squared error (MSE) to evaluate both the training and testing accuracy. The MSE results for the test data are presented in Tables \ref{tbl:f1}, \ref{tbl:f2}, and \ref{tbl:f3}. In all tables, we have employed \eqref{eq:map_alg2} as the rational mapping of Jacobi functions.

\begin{figure}[ht]
    \centering
    \includegraphics[width=0.8\textwidth]{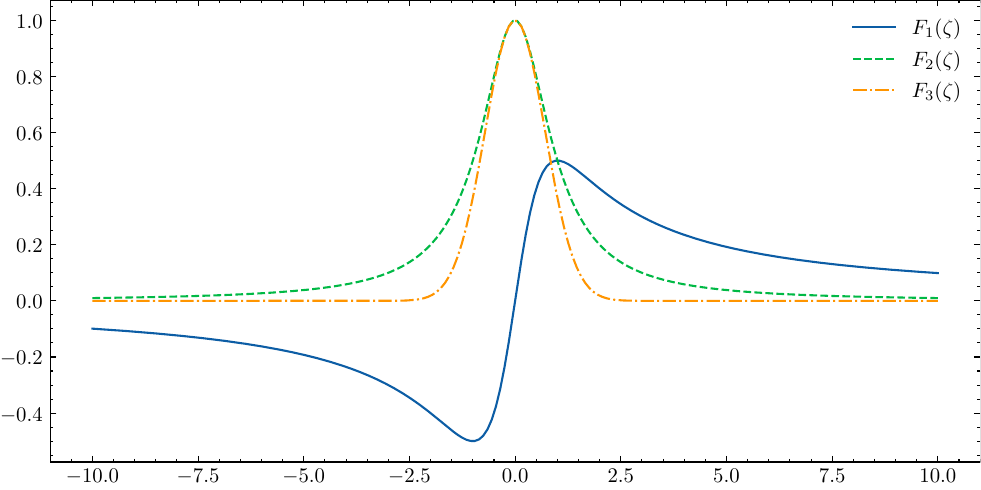}
    \caption{The plots of the functions \( F_1(\xi) \), \( F_2(\xi) \), and \( F_3(\xi) \). The prediction results of rKAN, fKAN, and KAN are presented in Tables \ref{tbl:f1}, \ref{tbl:f2}, and \ref{tbl:f3}.}
    \label{fig:regression}
\end{figure}
\begin{table}[ht]
\centering
\resizebox{1\textwidth}{!}{%
\begin{tabular}{@{}lcccccc@{}}
\toprule
Model & K=2 & K=3 & K=4 & K=5 & K=6  \\ \midrule
fKAN & \num{3.330e-07} & \num{7.454e-07} & \num{6.100e-07} & \num{3.339e-06} & \num{6.967e-06}   \\
Jacobi-rKAN & \num{4.223e-07} & \num{4.289e-07} & \num{3.273e-06} & \num{4.861e-05} & \num{1.132e-04}   \\
Padé[K/3]-rKAN & \num{2.616e-07} & \num{2.778e-02} & \num{1.634e-07} & \num{7.760e-07} & \num{2.142e-06}   \\
Padé[K/4]-rKAN & \num{2.951e-05} & \num{1.075e-06} & \num{1.109e-06} & \num{9.297e-07} & \num{1.722e-05}   \\
Padé[K/5]-rKAN & \num{4.334e-03} & \num{3.930e-03} & \num{1.042e-04} & \num{1.140e-03} & \num{9.587e-04}   \\
Padé[K/6]-rKAN & - & - & \num{7.672e-04} & - & - &  \\ \midrule
Tanh & \num{2.711e-07} & ReLU & \num{5.143e-04} & KAN & \num{2.240e-02} \\ \bottomrule
\end{tabular}
}
\caption{The MSE between the predicted values and the exact values for the \( F_1(\xi) \).}
\label{tbl:f1}
\end{table}

\begin{table}[ht]
\centering
\resizebox{1\textwidth}{!}{%
\begin{tabular}{@{}lcccccc@{}}
\toprule
Model & K=2 & K=3 & K=4 & K=5 & K=6 \\ \midrule
fKAN & \num{5.406e-07} & \num{4.629e-07} & \num{2.170e-06} & \num{3.699e-06} & \num{4.684e-06}   \\
Jacobi-rKAN(K) & \num{1.889e-07} & \num{8.220e-07} & \num{1.446e-06} & \num{9.917e-06} & \num{7.252e-05}   \\
Padé[K/3]-rKAN & \num{2.560e-07} & \num{3.752e-07} & \num{7.343e-07} & \num{7.138e-07} & \num{2.728e-06}   \\
Padé[K/4]-rKAN & \num{9.726e-04} & \num{6.778e-07} & \num{1.521e-06} & \num{1.727e-06} & \num{6.301e-04}   \\
Padé[K/5]-rKAN & \num{4.420e-03} & \num{3.629e-03} & \num{6.551e-04} & \num{1.813e-03} & \num{2.964e-04}   \\
Padé[K/6]-rKAN & - & - & \num{3.364e-04} & \num{1.279e-06} & \num{4.787e-02} &  \\ \midrule
Tanh &  \num{2.963e-07} & ReLU &  \num{3.402e-05} & KAN & \num{1.520e-02} \\ \bottomrule
\end{tabular}
}
\caption{The MSE between the predicted values and the exact values for the \( F_2(\xi) \).}
\label{tbl:f2}
\end{table}

\begin{table}[ht]
\centering
\resizebox{1\textwidth}{!}{%
\begin{tabular}{@{}lcccccc@{}}
\toprule
Model & K=2 & K=3 & K=4 & K=5 & K=6  \\ \midrule
fKAN & \num{4.320e-07} & \num{4.940e-07} & \num{7.540e-07} & \num{4.100e-06} & \num{1.420e-05}   \\
Jacobi-rKAN(K) & \num{5.330e-07} & \num{5.350e-07} & \num{3.540e-06} & \num{1.730e-05} & \num{2.760e-04}   \\
Padé[K/3]-rKAN & \num{2.590e-07} & \num{1.100e-02} & \num{5.670e-07} & \num{7.750e-07} & \num{2.680e-06}   \\
Padé[K/4]-rKAN & \num{1.460e-06} & \num{5.250e-07} & \num{6.990e-07} & \num{9.890e-02} & \num{4.550e-05}   \\
Padé[K/5]-rKAN & \num{4.070e-03} & \num{4.420e-03} & \num{4.340e-06} & \num{1.290e-05} & \num{1.240e-03}   \\
Padé[K/6]-rKAN & \num{4.420e-02} & \num{9.670e-04} & \num{3.790e-06} & - & \num{1.750e-05} &  \\ \midrule
Tanh & \num{1.490e-06} & ReLU & \num{4.750e-05} & KAN & \num{1.890e-02} \\ \bottomrule
\end{tabular}
}
\caption{The MSE between the predicted values and the exact values for the \( F_3(\xi) \).}
\label{tbl:f3}
\end{table}

\subsubsection{MNIST classification}
Recent studies have explored the application of KANs in various image processing tasks, such as image classification \cite{azam2024suitability, seydi2024unveiling, aghaei2024fkan, cheon2024kolmogorov}, image denoising \cite{aghaei2024fkan} image segmentation \cite{li2024ukan}. In this section, we focus on the classification task using the MNIST dataset, which includes 60,000 training images of handwritten digits and 10,000 test images, each with a size of $28 \times 28 \times 1$. We designed a 2-dimensional convolutional neural network for this task, as illustrated in Figure \ref{fig:mnist-architecture}. The network was trained using the Adam optimizer with the default learning rate in Keras, a batch size of 512, and 30 epochs. The validation loss and accuracy during training are depicted in Figure \ref{fig:mnist-training}. Additionally, the performance metrics on the test set, including accuracy and loss, are presented in Table \ref{tbl:mnist}, which also compares our results with those obtained using fractional KAN \cite{aghaei2024fkan} and common activation functions like hyperbolic tangent and ReLU.

\begin{figure}[ht]
        \centering
        \includegraphics[width=1\textwidth]{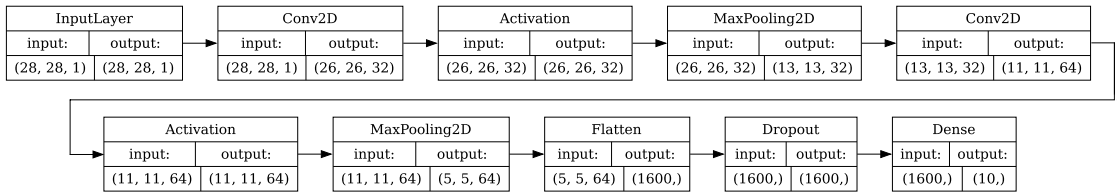}
        \caption{The architecture of proposed method for MNIST classification data.}
    \label{fig:mnist-architecture}
\end{figure}

\begin{figure}[ht]
        \centering
        \includegraphics[width=0.9\textwidth]{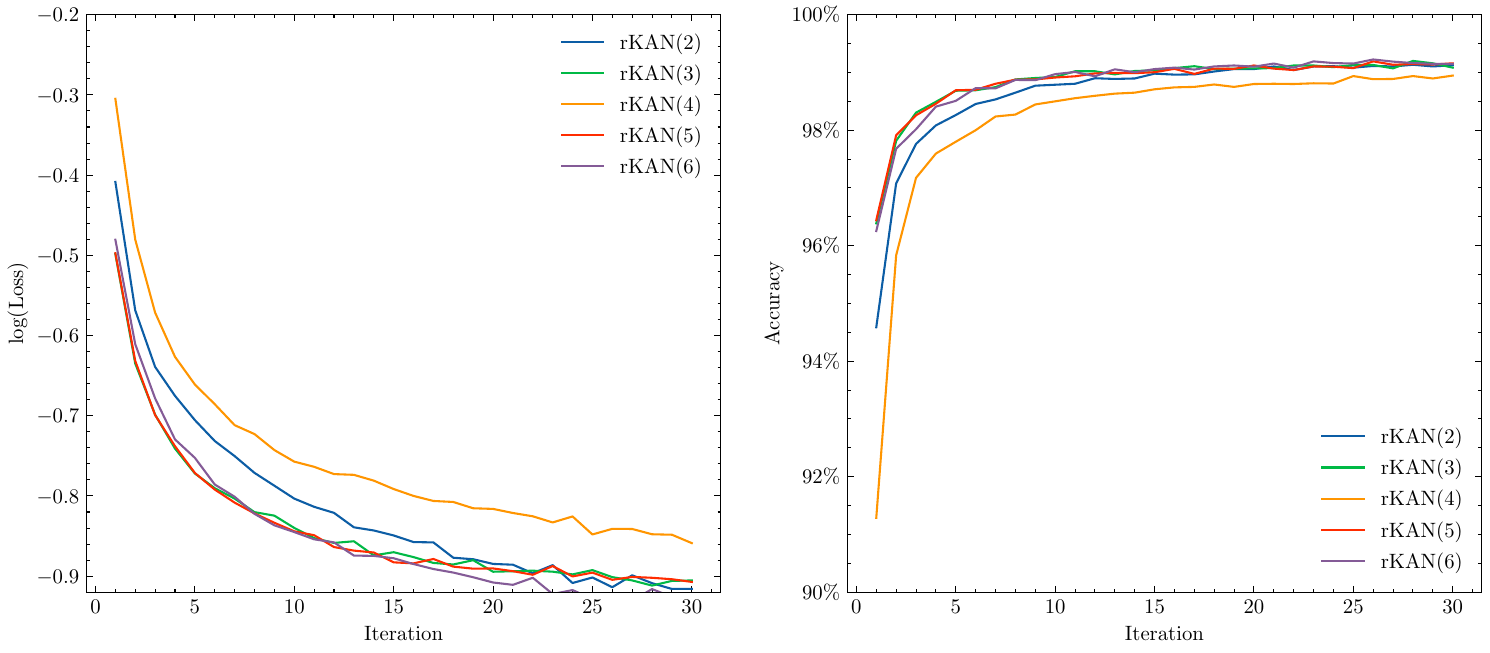}
        \caption{Loss and accuracy of MNIST classification using Jacobi-rKAN with different values of K.}
    \label{fig:mnist-training}
\end{figure}

\begin{table}[ht!]
\centering

\renewcommand{\tabularxcolumn}[1]{m{#1}}
\begin{tabularx}{0.9\textwidth}{@{}*{5}{X}@{}}

    \toprule
    Act. Func.  & \multicolumn{2}{c}{Loss} & \multicolumn{2}{c}{Accuracy} \\ \midrule
                & Mean        & Std.       & Mean          & Std.         \\ \cmidrule(lr){2-3} \cmidrule(lr){4-5}
    Sigmoid     & $0.0611$    & $0.0028$   & $98.092$      & $0.0937$     \\
    Tanh        & $0.0322$    & $0.0015$   & $98.904$      & $0.0695$     \\
    ReLU        & $0.0256$    & $0.0010$   & $99.140$      & $0.0434$     \\
    fKAN(2) \cite{aghaei2024fkan}     & $0.0252$    & $0.0017$   & $99.134$      & $0.0484$     \\
    fKAN(3) \cite{aghaei2024fkan}    & $0.0224$    & $0.0019$   & $99.200$      & $0.0787$     \\
    fKAN(4 )\cite{aghaei2024fkan}     & $0.0217$    & $0.0008$   & $99.228$      & $0.0515$     \\
    fKAN(5) \cite{aghaei2024fkan}     & $0.0249$    & $0.0009$   & $99.204$      & $0.0467$     \\
    fKAN(6) \cite{aghaei2024fkan}     & $0.0290$    & $0.0028$   & $99.024$      & $0.1198$     \\
    rKAN(2)     & $0.0215$    & $0.0012$   & $99.268$      & $0.0683$     \\
    rKAN(3)     & $0.0222$    & $0.0012$   & $99.210$      & $0.0464$     \\
    rKAN(4)     & $0.0292$    & $0.0006$   & $99.060$      & $0.0332$     \\
    rKAN(5)     & $\mathbf{0.0213}$    & $0.0004$   & $\mathbf{99.293}$      & $0.0597$     \\
    rKAN(6)     & $0.0214$    & $0.0027$   & $99.218$      & $0.0944$     \\ \bottomrule
    \end{tabularx}
\caption{Performance of different activation functions in a CNN for classifying MNIST dataset. It is observed that rKAN outperforms fKAN in certain cases.}
\label{tbl:mnist}
\end{table}

\subsection{Physics-informed Deep Learning}

Physics-informed deep learning tasks often involve mathematical problems augmented with real-world data, providing researchers with more precise insights into both the data and the governing equations. KANs have been developed to address these challenges \cite{aghaei2024fkan, liu2024kan,nehma2024leveraging, wang2024kolmogorov, shukla2024comprehensive}. In these networks, the loss function is defined to enable the network to approximate the dynamics of physical problems. For example, for a differential equation in the operator form $\mathcal{L}(F) = 0$ with initial condition $F(0)=F_0$, the network loss is defined as the mean squared residual \cite{firoozsalari2023deepfdenet}:
\begin{equation*}
    \text{Loss}(\boldsymbol\xi) = \frac{1}{|\boldsymbol{\xi}|} \sum_{i=1}^{|\boldsymbol\xi|}{\mathcal{L}(F)(\boldsymbol\xi_i)^2} + |\hat{F}(0) - F_0|^2,
\end{equation*}
where $\boldsymbol{\xi}$ represents the training data in the domain of the problem. In this section, we evaluate two examples of data-driven solutions to differential equations using rKAN.

\subsubsection{Ordinary Differential Equations}
For this task, we will focus on the Lane-Emden equation, a well-known ordinary differential equation. This equation represents a dimensionless form of Poisson's equation, which describes the gravitational potential of a Newtonian, self-gravitating, spherically symmetric, polytropic fluid. This equation, for a positive integer $w$, is defined as follows:

\begin{equation*}
    \begin{aligned}
    \frac{\text{d}^2}{\text{d}\xi^2}F(\xi) &+ \frac2\xi \frac{\text{d}}{\text{d}\xi}F(\xi) + F^w(\xi) = 0,\\
    F(0)&=1,\quad   F'(0)=0.
\end{aligned}
\end{equation*}
To simulate this problem, we use the rKAN architecture with Padé and Jacobi rational mapping \eqref{eq:map_alg2}. For a fair comparison, we adopt a network architecture similar to that of fKAN \cite{aghaei2024fkan}, but replace the fKAN layers with rKAN layers. This network incorporates six different Jacobi basis functions (i.e., \(K=6\) in our rKAN architecture) and is optimized using the L-BFGS algorithm with 1500 equidistant points in the domain $[0,15]$.

The first roots of the predicted solution to the differential equation hold significant physical meaning. Therefore, in Table \ref{tbl:lane-emden}, we compare our results with those obtained from fractional KAN \cite{aghaei2024fkan} and the Grammatical Evolution Physics-Informed Neural Network (GEPINN) \cite{mazraeh2024gepinn}.

\begin{table}[ht]
\centering
\begin{tabular}{@{}ccccc@{}}
\toprule
$w$ &        Jacobi-rKAN       &    Padé[q/6]-rKAN        & fKAN \cite{aghaei2024fkan} & GEPINN \cite{mazraeh2024gepinn} \\ \midrule
0   &  $5.15 \times 10^{-6}$   &  $4.86 \times 10^{-6}$   & $3.52 \times 10^{-5}$     & $1.40 \times 10^{-7}$ \\
1   &  $7.12 \times 10^{-5}$   &  $8.67 \times 10^{-6}$   & $8.67 \times 10^{-6}$     & $4.83 \times 10^{-3}$ \\
2   &  $2.88 \times 10^{-5}$   &  $5.09 \times 10^{-5}$   & $9.34 \times 10^{-6}$     & $8.93 \times 10^{-3}$ \\
3   &  $2.40 \times 10^{-4}$   &  $1.06 \times 10^{-5}$   & $5.55 \times 10^{-7}$     & $1.88 \times 10^{-2}$ \\
4   &  $1.57 \times 10^{-3}$   &  $2.82 \times 10^{-2}$   & $2.97 \times 10^{-4}$     & $5.08 \times 10^{-2}$ \\ \bottomrule
\end{tabular}
\caption{Comparison of the first roots of the predicted solution with the exact roots from \cite{horedt2004polytropes} and the approximated results from a similar neural network approaches \cite{mazraeh2024gepinn, aghaei2024fkan}.
}
\label{tbl:lane-emden}
\end{table}

\subsubsection{Partial Differential Equations}
For a more challenging task, we select an elliptic partial differential equation (PDE) defined as follows:
\begin{equation}
\begin{aligned}
\frac{\partial^2}{\partial \xi_1^2} F(\xi_1, \xi_2) &+\frac{\partial^2}{\partial \xi_2^2} F(\xi_1, \xi_2)  = \sin(\pi \xi_1) \sin(\pi\xi_2), \\
F(\xi_1,0) &= 0,\quad F(\xi_1,1) = 0,\\
F(0,\xi_2) &= 0,\quad F(1,\xi_2) = 0.
\end{aligned}
\label{eq:pde}
\end{equation}
The exact solution to this PDE is given by \cite{mall2017single}:
\begin{equation*}
    F\left(\xi_1,\xi_2\right) = -\frac{1}{2\pi^{2}} \sin\left(\pi \xi_1\right) \sin\left(\pi \xi_2\right).
\end{equation*}

We simulate the solution of this PDE using a simple rKAN with the architecture $[1, 10, 10, 1]$ and Jacobi-rKAN basis functions of order 4 using $50\times 50$ datapoints in $[0,1]^2$. The simulation results for this problem are shown in Figure \ref{fig:pde}.

\begin{figure}[htbp]
  \centering
  \begin{subfigure}{0.4\textwidth}
    \centering
    \includegraphics[width=\textwidth]{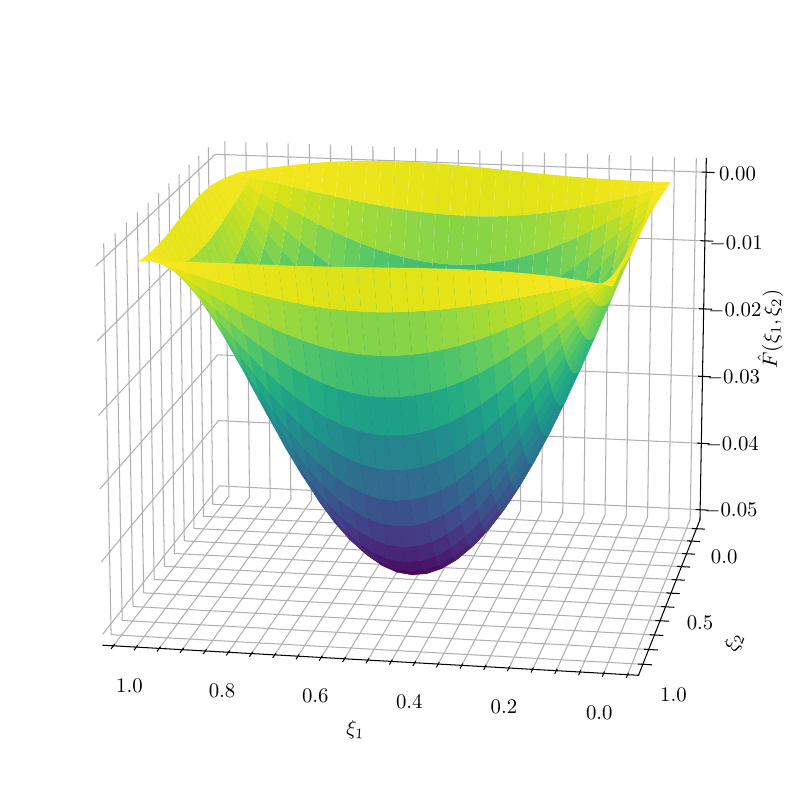}
    \caption{Prediction}
    
  \end{subfigure}%
  \begin{subfigure}{0.4\textwidth}
    \centering
    \includegraphics[width=\textwidth]{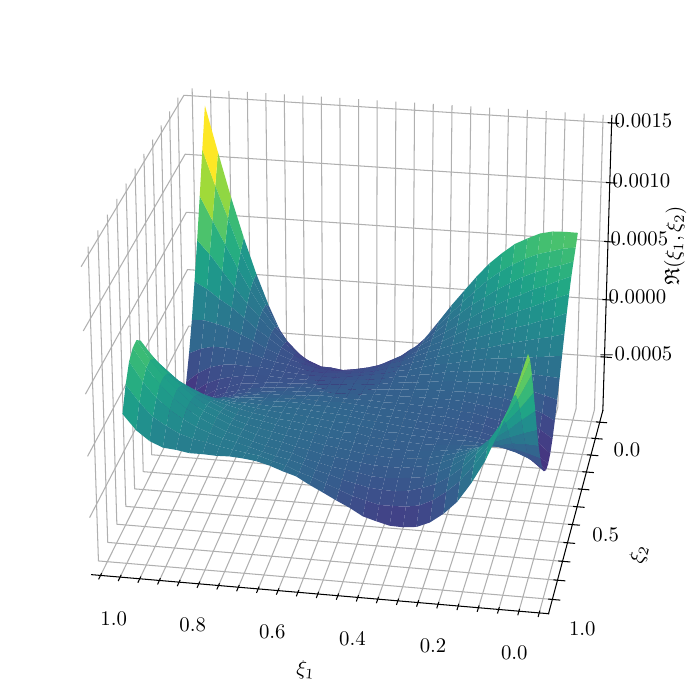}
    \caption{Residual}
    
  \end{subfigure}
  \caption{The predicted solution and the residual function with respect to the exact solution for the elliptic PDE given in Equation \eqref{eq:pde}.}
  \label{fig:pde}
\end{figure}
% \clearpage
\section{Conclusion}
In this paper, we have introduced a new perspective on Kolmogorov-Arnold networks utilizing rational basis functions. Rational functions, a type of basis function in numerical approximation, enhance prediction accuracy, particularly in scenarios involving asymptotic behavior and singularities. We proposed two types of rational KANs based on the Padé approximation and rational Jacobi functions. The first architecture employs the division of two polynomials, specifically shifted Jacobi functions, while the second approach maps Jacobi functions directly into a rational space. In both models, the basis function hyperparameters $\alpha$, $\beta$, and $\iota$ (for rational Jacobi functions) are optimized as network weights. We also demonstrated that our method can be integrated with fractional KANs in certain contexts.

We validated the effectiveness of the proposed method through simulations on real-world examples, including a regression task, a classification task, and numerical approximations for solving the Lane-Emden ordinary differential equation and an elliptic partial differential equation. The results indicate that our method can sometimes achieve greater accuracy compared to existing alternatives. However, our experiments showed that the Padé-rKAN increases the time complexity of training, as it involves the computation of two weighted polynomials.

For future work, we suggest exploring the use of rational versions of B-spline curves \cite{tiller1983rational, bardis1990surface}, which are renowned for their flexible shape representation capabilities. Furthermore, a focused evaluation of fractional rational KANs is warranted, particularly for solving physics-informed problems defined on semi-infinite domains \cite{babaei2024solving, aghaei2023solving}.

\printbibliography

\end{document}